\newtheorem{theorem}{Theorem}[section]
\newtheorem{proposition}[theorem]{Proposition}
\newtheorem{remark}{Remark}[section]
\newtheorem{definition}{Definition}[section]
\newcommand{\E}{\mathop{\mathbb E}}
\newcommand{\Tr}{\operatorname{Tr}}
\newcommand{\N}{\mathcal{N}}
\def\x{{\mathbf x}}
\DeclareMathOperator*{\argmin}{arg \, min}
\renewcommand{\t}[1]{\mathrm{T}#1}
\renewcommand{\d}[1]{\;\mathrm{d}#1}
\renewcommand{\t}[1]{\mathrm{T}#1}
\begin{document}
\title{BCMA-ES: A Bayesian approach to CMA-ES}
%\titlenote{Produces the permission block, and copyright information}
%\subtitle{Subtitle}
%\subtitlenote{The full version of the author's guide is available as
%  \texttt{acmart.pdf} document}

%%% The submitted version for review should be ANONYMOUS

%\author{Anonymous author 1}
%\affiliation{%
%  \institution{Institution}
%  \city{City} 
%  \state{State} 
%}
%\email{firstname.lastname@institution.com}
%
%\author{Anonymous author 2}
%\affiliation{%
%  \institution{Institution}
%  \city{City} 
%  \state{State} 
%}
%\email{firstname.lastname@institution.com}
%
%\author{Anonymous author 3}
%\affiliation{%
%  \institution{Institution}
%  \city{City} 
%  \state{State} 
%}
%\email{firstname.lastname@institution.com}
%
%\author{Anonymous author 4}
%\affiliation{%
%  \institution{Institution}
%  \city{City} 
%  \state{State} 
%}
%\email{firstname.lastname@institution.com}

\author{Eric Benhamou}
\affiliation{%
  \institution{LAMSADE and A.I Square Connect}
  \city{Paris Dauphine} 
  \state{France} 
  \postcode{92200}
}
\email{eric.benhamou@aisquareconnect.com}
%\email{eric.benhamou@dauphine.eu}

\author{David Saltiel}
\affiliation{%
  \institution{LISIC and A.I Square Connect}
  \streetaddress{Maison de la Recherche Blaise Pascal,  50 rue Ferdinand Buisson BP 719}
  \city{ULCO-LISIC} 
  \state{France} 
  \postcode{62 228}
}
\email{david.saltiel@aisquareconnect.com}

\author{Sebastien Verel}
\affiliation{%
 \institution{ULCO-LISIC}
  \streetaddress{Maison de la Recherche Blaise Pascal,  50 rue Ferdinand Buisson BP 719}
 % \city{ULCO-LISIC} 
  \state{France} 
  \postcode{62 228}
}
\email{verel@univ-littoral.fr}

\author{Fabien Teytaud}
\affiliation{%
  \institution{ULCO-LISIC}
  \streetaddress{Maison de la Recherche Blaise Pascal,  50 rue Ferdinand Buisson BP 719}
 % \city{ULCO-LISIC} 
  \state{France} 
  \postcode{62 228}
}
\email{fabien.teytaud@univ-littoral.fr}

% The default list of authors is too long for headers.
\renewcommand{\shortauthors}{Benhamou, Saltiel, Verel, Teytaud}

\begin{abstract}
This paper introduces a novel theoretically sound approach for the celebrated CMA-ES algorithm. Assuming the parameters of the multi variate normal distribution for the minimum follow a conjugate prior distribution, we derive their optimal update at each iteration step. Not only provides this Bayesian framework a justification for the update of the CMA-ES algorithm but it also gives two new versions of CMA-ES either assuming normal-Wishart or normal-Inverse Wishart priors, depending whether we parametrize the likelihood by its covariance or precision matrix. We support our theoretical findings by numerical experiments that show fast convergence of these modified versions of CMA-ES.
\end{abstract}

%
% The code below should be generated by the tool at
% http://dl.acm.org/ccs.cfm
% Please copy and paste the code instead of the example below. 
%
\begin{CCSXML}
<ccs2012>
<concept>
<concept_id>10002950.10003648</concept_id>
<concept_desc>Mathematics of computing~Probability and statistics</concept_desc>
<concept_significance>300</concept_significance>
</concept>
</ccs2012>
\end{CCSXML}

\ccsdesc[300]{Mathematics of computing~Probability and statistics}

\keywords{CMA-ES, Bayesian, conjugate prior, normal-inverse-Wishart}

\maketitle

\section{Introduction}
The covariance matrix adaptation evolution strategy (CMA-ES) ~\cite{HansenOstermeier_2001} is arguably one of the most powerful real-valued derivative-free optimization algorithms, finding
many applications in machine learning. It is a state-of-the-art optimizer for continuous black-box functions as shown by the various benchmarks of the \href{http://coco.gforge.inria.fr/}{COmparing Continuous Optimisers} INRIA platform for ill-posed functions. It has led to a large number of papers and articles and we refer the interested reader to~\cite{HansenOstermeier_2001,Auger_2004,Igel_2007,Auger_2009,Hansen_2011,Auger_2012,Hansen_2014,Auger_2015,Auger_2016,Ollivier_2017} and~\cite{Hansen_2018} to cite a few.

It has has been successfully applied in many unbiased performance comparisons and numerous real-world applications. In particular, in machine learning, it has been used for direct policy search in reinforcement learning and hyper-parameter tuning in supervised learning (\cite{Igel_2009a,Igel_2009b,Igel_2010}), and references therein, as well as hyperparameter optimization of deep neural networks ~\cite{Loshchilov_2016}

In a nutshell, the ($\mu$ / $\lambda$) CMA-ES is an iterative black box optimization algorithm, that, in each of its iterations, samples $\lambda$ candidate solutions from a multivariate normal distribution, evaluates these solutions (sequentially or in parallel) retains $\mu$ candidates and adjusts the sampling distribution used for the next iteration to give higher probability to good samples. Each iteration can be individually seen as taking an initial guess or \emph{prior} for the multi variate parameters, namely the mean and the covariance, and after making an experiment by evaluating these sample points with the fit function updating the initial parameters accordingly. 

Historically, the CMA-ES has been developed heuristically, mainly by conducting experimental research and validating intuitions empirically. Research was done without much focus on theoretical foundations because of the apparent complexity of this algorithm. It was only recently that~\cite{Akimoto_2010,Glasmachers_2010} and~\cite{Ollivier_2017} made a breakthrough and provided a theoretical justification of CMA-ES updates thanks to information geometry. They proved that CMA-ES was performing a natural gradient descent in the Fisher information metric. These works provided nice explanation for the reasons of the performance of the CMA-ES because of strong invariance properties, good search directions, etc
%... However, they concentrate on explaining current algorithms rather than producing new algorithms where the update could be modified according to some theoretical arguments.

There is however another way of explanation that has been so far ignored and could also bring nice insights about CMA-ES. It is Bayesian statistics theory. At the light of Bayesian statistics, CMA-ES can be seen as an iterative prior posterior update. But there is some real complexity due to tricky updates that may explain why this has always been ignored. First of all, in a regular Bayesian approach, all sample points are taken. This is not the case in the ($\mu$/$\lambda$) CMA-ES as out of the $\lambda$ generated paths, only the $\mu$ best are selected. The updating weights are also constant which is not consistent with Bayesian updates. But more importantly, the covariance matrix update is the core of the problem. It appeals important remarks. The update is done according to a weighted combination of a rank one matrix referred to $p_{C} p_{C}^T$ with parameter $c_1$ and a rank $min(\mu,n)$ matrix with parameter $c_{\mu}$, whose details are given for instance in~\cite{Hansen_2016}. The two updates for the covariance matrix makes the Bayesian update interpretation challenging as these updates are done according to two paths: the isotropic and anisotropic evolution path. All this may explain why a Bayesian approach for interpreting and revisiting the CMA-ES algorithm have seemed a daunting task and not tackled before.

This is precisely the objective of this paper. Section \ref{sec:framework} recalls various Bayesian concepts of updates for prior and posterior to highlight the analogy of an iterative Bayesian update. Section \ref{sec:algorithms} presents in greater details the Bayesian approach of CMA-ES, with the corresponding family of derived algorithms, emphasizing the various design choices that can conduct to multiple algorithms. Section \ref{sec:experiments} provides numerical experiments and shows that
Bayesian adapted CMA-ES algorithms perform well on convex and non convex functions. We finally conclude about some possible extensions and further experiments.

However, the analogy with a successive Bayesian prior posterior update has been so far missing in the landscape of CMA-ES for multiple reasons. First of all, from a cultural point of view, the evolutionary and Bayesian community have always been quite different and not overlapping. Secondly, the CMA-ES was never formulated in terms of a prior and posterior update making its connection with Bayesian world non obvious. Thirdly, when looking in details at the parameters updates, the weighted combination between the global and local search makes the interpretation of a Bayesian posterior update non trivial. We will explain in this paper that the global search needs to be addressed with a special dilatation techniques that is not common in Bayesian wold.

%%%%%%%%%%%%%%%%%%%%%
%%%% Framework 
%%%%%%%%%%%%%%%%%%%%%
\section{Framework}\label{sec:framework}
CMA-ES computes at each step an update of the mean and covariance of the distribution of the minimum. From a very general point of view this can be interpreted as a prior posterior update in Bayesian statistics.

\subsection{Bayesian vs Frequentist probability theory}
The justification of the Bayesian approach is discussed in \cite{Robert_2007}. In Bayesian probability theory, we assume a distribution on unknown parameters of a statistical
model that can be characterized as a probabilization of uncertainty. This procedure leads to an axiomatic reduction from the notion of unknown to the notion of randomness but with probability. We do not know the value of the parameters for sure but we know specific values that these parameters can take with higher probabilities. This creates a prior distribution that is updated as we make some experiments as shown in ~\cite{Gelman_2004,Marin_2007,Robert_2007}.  In the Bayesian view, a probability is assigned to a hypothesis, whereas under frequentist inference, a hypothesis is typically tested without being assigned a probability. There are even some nice theoretical justification for it as presented in~\cite{Jordan_2010}.

\begin{definition} (Infinite exchangeability).
We say that $(x_1, x_2, . . .)$ is an infinitely exchangeable sequence of
random variables if, for any n, the joint probability $p(x_1, x_2, . . . , x_n)$ is invariant to permutation of the
indices. That is, for any permutation $\pi$,
$$
p(x_1, x_2, . . . , x_n) = p(x_{\pi1}, x_{\pi2}, . . . , x_{\pi n})
$$
\end{definition}

Equipped with this definition, the De Finetti's theorem as provided below states that exchangeable observations are conditionally independent relative to some latent variable.

\begin{theorem}
\label{theorem_finetti}
(De Finetti, 1930s). A sequence of random variables $(x_1, x_2, . . .)$ is infinitely exchangeable iff,
for all n,
$$
p(x_1, x_2, . . . , x_n) = \int \prod_{i=1}^n p(x_i|\theta) P(d\theta),
$$
for some measure P on $\theta$.
\end{theorem}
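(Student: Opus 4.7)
The plan is to prove the two directions separately. The ``if'' direction is immediate: if $p(x_1,\ldots,x_n) = \int \prod_{i=1}^n p(x_i\mid\theta)\, P(d\theta)$, then for any permutation $\pi$ the integrand is invariant under reordering of factors (by commutativity of multiplication), so $p(x_{\pi 1},\ldots,x_{\pi n}) = p(x_1,\ldots,x_n)$ for every $n$, which is exactly infinite exchangeability.

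For the ``only if'' direction, the strategy is to construct the mixing measure $P$ explicitly from the empirical distributions. Let $\mu_n := \frac{1}{n}\sum_{i=1}^n \delta_{x_i}$ denote the empirical measure after $n$ observations. First I would show, using the reverse-martingale property of the exchangeable $\sigma$-algebra $\mathcal{E}_n = \sigma(\mu_n,\mu_{n+1},\ldots)$, that for every bounded continuous test function $f$ the averages $\frac{1}{n}\sum_i f(x_i)$ converge almost surely (this is the exchangeable analogue of the law of large numbers, obtained by the reverse martingale convergence theorem). By choosing a countable convergence-determining family of such $f$, the empirical measures $\mu_n$ converge weakly almost surely to a random probability measure $\mu_\infty$, measurable with respect to the exchangeable tail $\sigma$-algebra $\mathcal{E}_\infty = \bigcap_n \mathcal{E}_n$. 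Taking $P$ to be the law of $\mu_\infty$ (so that $\theta$ ranges over probability measures on the state space) gives the candidate mixing measure.

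The key remaining step is to verify that, conditional on $\mu_\infty$, the sequence $(x_1, x_2, \ldots)$ is i.i.d.\ with common law $\mu_\infty$. For this, one combines exchangeability with the almost-sure convergence above to derive $\E[\prod_{j=1}^k g_j(x_j) \mid \mathcal{E}_\infty] = \prod_{j=1}^k \int g_j\, d\mu_\infty$ for bounded measurable $g_j$. Integrating this identity over $\mathcal{E}_\infty$ yields precisely the claimed integral representation, with $p(\cdot\mid\theta)$ interpreted as the disintegration of $\mu_\infty$ given $\theta$.

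The main obstacle will be this last measure-theoretic step: justifying the conditional independence in the limit and producing a regular version of the conditional distribution so that $p(x_i\mid\theta)$ has unambiguous meaning. This typically requires the state space of the $x_i$ to be standard Borel (for instance Polish with its Borel $\sigma$-algebra), so that regular conditional distributions and the associated disintegration exist; without such an assumption the statement should be reformulated directly in terms of random probability measures rather than conditional densities. A secondary subtlety is establishing the almost-sure weak convergence of $\mu_n$, which requires applying the reverse martingale convergence theorem to countably many test functions simultaneously and then upgrading from pointwise convergence on a dense class to weak convergence of measures.
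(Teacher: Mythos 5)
Your proposal is correct in spirit, but note that the paper itself does not prove Theorem~\ref{theorem_finetti} at all: it simply invokes the result and refers the reader to Schervish (1996, Section~1.5) for a proof. So any proof you give is, relative to the paper, a ``different route,'' and the one you chose is the standard modern argument (Aldous/Kallenberg style): the ``if'' direction by permutation-invariance of the product integrand, and the ``only if'' direction by reverse-martingale convergence of empirical averages, almost-sure weak convergence of the empirical measures $\mu_n$ to a random probability measure $\mu_\infty$ measurable with respect to the exchangeable $\sigma$-algebra, and then conditional i.i.d.-ness given $\mathcal{E}_\infty$ with $P$ the law of $\mu_\infty$. This is a legitimate and self-contained path, and your caveats are the right ones: the theorem as stated (with densities $p(x_i\mid\theta)$ and a measure $P$ on $\theta$) implicitly requires a standard Borel (e.g.\ Polish) state space so that regular conditional distributions and the disintegration exist -- indeed the representation can fail on general measurable spaces -- and the weak convergence of $\mu_n$ must be obtained by running the reverse martingale theorem over a countable convergence-determining class. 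The only place your sketch is genuinely thin is the step you yourself flag as the crux: the identity $\mathbb{E}\bigl[\prod_{j=1}^k g_j(x_j)\mid\mathcal{E}_\infty\bigr]=\prod_{j=1}^k\int g_j\,d\mu_\infty$ does not follow from convergence of $\mu_n$ alone; the usual argument uses that, conditionally on $\mathcal{E}_n$ (equivalently, given the order statistics of $x_1,\dots,x_n$), the vector $(x_1,\dots,x_k)$ is distributed as a sample drawn without replacement from $\{x_1,\dots,x_n\}$, and that the discrepancy between sampling with and without replacement is $O(k^2/n)$, so it vanishes as $n\to\infty$; exchangeability enters exactly there. Filling in that estimate (or citing the backward-martingale form of it) would complete the proof; as written, your text states what must be shown rather than showing it, which is acceptable for a plan but would need to be expanded for a full proof. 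By contrast, the paper's choice to cite Schervish buys brevity and avoids all of these measure-theoretic hypotheses appearing in the main text, at the cost of leaving the representation theorem unproved.
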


This representation theorem \ref{theorem_finetti} justifies the use of priors on parameters since for exchangeable data, there must exist a parameter $\theta$, a likelihood $p(x|\theta)$ and a distribution $\pi$ on $\theta$. A proof of De Finetti theorem is for instance given in~\cite{Schervish_1996} (section 1.5).

\begin{remark}
The De Finetti is trivially satisfied in case of i.i.d. sampling as the sequence is clearly exchangeable and that the joint probability is clearly given by the product of all the marginal distributions. However, the De Finetti goes far beyond as it proves that the infinite exchangeability is enough to prove that the joint distribution is the product of some marginal distribution for a given parameter $\theta$. The sequence may not be independent neither identically distributed, which is a much stronger result!
\end{remark}

\subsection{Conjugate priors}
In Bayesian statistical inference, the probability distribution that expresses one's (subjective) beliefs about the distribution parameters before any evidence is taken into account is called \emph{the prior} probability distribution, often simply called the prior. In CMA-ES, it is the distribution of the mean and covariance. We can then update our prior distribution with the data using Bayes' theorem to obtain a posterior distribution. The \emph{posterior} distribution is a probability distribution that represents your updated beliefs about the parameters after having seen the data. The Bayes' theorem tells us \emph{the fundamental rule} of Bayesian statistics, that is
$$
\text{Posterior} \propto \text{Prior} \times \text{Likelihood}
$$

\noindent The proportional sign indicates that one should compute the distribution up to a  renormalization constant that enforces the distribution sums to one. This rule is simply a direct consequence of Baye's theorem. Mathematically, let us say that for a random variable $X$, its distribution $p$ depends on a parameter $\theta$ that can be multi-dimensional. To emphasize the dependency of the distribution on the parameters, let us write this distribution as $p(x|\theta)$ and let us assume we have access to a prior distribution $\pi(\theta)$. Then the joint distribution of $(\theta,x)$ writes simply as 
$$
\phi(\theta,x) = p(x | \theta) \pi(\theta)
$$
The marginal distribution of $x$ is trivially given by marginalizing the joint distribution by $\theta$ as follows: 
$$
m(x) = \int \phi(\theta,x)  d\theta =  \int p(x | \theta) \pi(\theta) d\theta
$$
The posterior of $\theta$  is obtained by Bayes's formula as 
$$
\pi(\theta | x ) = \frac{  p(x | \theta) \pi(\theta) }{\int p(x | \theta) \pi(\theta) d\theta} \propto  p(x | \theta) \pi(\theta)
$$

Computing a posterior is tricky and does not bring much value in general. A key concept in Bayesian statistics is conjugate priors that makes the computation really easy and is described at length below.

\begin{definition}\label{sec:conjprior}
A prior distribution $\pi(\theta)$ is said to be a conjugate prior if the posterior distribution 
\begin{align}
\pi(\theta | x ) \propto   p(x | \theta) \pi(\theta) \label{eq:ExponentialFamilyPosterior}
\end{align}
remains in the same distribution family as the prior.
\end{definition}

At this stage, it is relevant to introduce exponential family distributions as this higher level of abstraction that encompasses the multi variate normal trivially solves the issue of founding conjugate priors. This will be very helpful for inferring conjugate priors for the multi variate Gaussian used in CMA-ES.

\begin{definition}
A distribution is said to belong to the exponential family if it can be written (in its canonical form) as:
\begin{equation}
p(\x | \eta)=h(\x)\exp(\eta \cdot T(\x)-A(\eta )),
\label{eq:ExponentialFamilyNaturalForm}
\end{equation}
where  $\eta$ is the natural parameter, $T(\x)$ is the sufficient statistic, $A(\eta)$ is log-partition function and  $h(\x)$ is the base measure. $\eta$ and  $T(\x)$ may be vector-valued. Here $a \cdot b$ denotes the inner product of $a$ and $b$.

\noindent The log-partition function is defined by the integral
\begin{align}
A(\eta)\triangleq \log \int_{\mathcal{X}}{h(\x)\exp({\eta \cdot T(\x)}) \d x}.
\end{align}
Also, $\eta \in \Omega=\{\eta \in \mathbb{R}^m |A(\theta)< +\infty\}$ where $\Omega$ is the natural parameter space. Moreover, $\Omega$ is a convex set and $A(\cdot)$ is a convex function on $\Omega$.
\end{definition}

\begin{remark} \label{ex:exponentialfamily} 
Not surprisingly, the normal distribution $\N(\x;\mu,\Sigma)$ with mean $\mu\in\mathbb{R}^d$ and covariance matrix $\Sigma$ belongs to the exponential family but with a different parametrisation. Its exponential family form is given by:
\begin{subequations}
\label{eq:normalExpForm}
\begin{align}
\eta(\mu,\Sigma)&=\begin{bmatrix} \Sigma^{-1}\mu \\  \mathrm{vec}(\Sigma^{-1}) \end{bmatrix}, \qquad T(\x) =\begin{bmatrix} \x\\\mathrm{vec}(-\frac{1}{2} \x\x^\t) \end{bmatrix} \label{eq:Natparams},\\
h(\x)&=(2\pi)^{-\frac{d}{2}}, \qquad  A(\eta(\mu,\Sigma)) =\frac{1}{2}\mu^\t \Sigma^{-1}\mu+\frac{1}{2}\log|\Sigma|.
\end{align} 
\end{subequations}
where in equations  \eqref{eq:Natparams}, the notation $\mathrm{vec}(\cdot)$ means we have vectorized the matrix, stacking each column on top of each other and hence can equivalently write for $a$ and $b$, two matrices, the trace result $\Tr(a^\t{}b)$ as the scalar product of their vectorization $\mathrm{vec}(a)\cdot \mathrm{vec}(b)$. We can remark the canonical parameters are very different from traditional (also called moment) parameters. We can notice that changing slightly the sufficient statistic $T(x)$ leads to change the corresponding canonical parameters $\eta$.
\end{remark}

\noindent For an exponential family distribution, it is particularly easy to form conjugate prior. 
\begin{proposition}
If the observations have a density of the exponential family form $p(x |\theta,\lambda) = h(x) \exp\Big(\eta(\theta,\lambda)^T T(x)-n A(\eta(\theta,\lambda))\Big)$, with $\lambda$ a set of hyper-parameters, then the prior with likelihood defined by $\pi(\theta) \propto \exp\left( \mu_1\cdot \eta(\theta,\lambda) - \mu_0 A(\eta(\theta,\lambda)) \right)$ with $\mu\triangleq(\mu_0, \mu_1)$ is a conjugate prior.
\end{proposition}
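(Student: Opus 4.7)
The plan is to apply Bayes' rule directly and verify by inspection that the posterior has the same functional form as the prior, only with shifted hyperparameters. Since both the likelihood and the prior are written as exponentials of affine functions of $\eta(\theta,\lambda)$ and $A(\eta(\theta,\lambda))$, the product will again be an exponential of an affine combination, and the whole argument will amount to collecting terms in the exponent.

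More concretely, I would first write out the posterior using equation~\eqref{eq:ExponentialFamilyPosterior}:
\begin{equation*}
\pi(\theta \mid x) \;\propto\; p(x \mid \theta,\lambda)\,\pi(\theta)
\;=\; h(x)\exp\!\bigl(\eta(\theta,\lambda)\cdot T(x) - n\,A(\eta(\theta,\lambda))\bigr)\,\exp\!\bigl(\mu_1\cdot\eta(\theta,\lambda) - \mu_0 A(\eta(\theta,\lambda))\bigr).
\end{equation*}
The factor $h(x)$ does not depend on $\theta$ and is therefore absorbed into the proportionality constant (together with the $\theta$-independent normalization of the prior). Merging the two exponentials I obtain
\begin{equation*}
\pi(\theta \mid x) \;\propto\; \exp\!\Bigl(\bigl(\mu_1 + T(x)\bigr)\cdot\eta(\theta,\lambda) \;-\; (\mu_0+n)\,A(\eta(\theta,\lambda))\Bigr).
\end{equation*}

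The last step is to match this against the prior template $\exp(\tilde\mu_1\cdot\eta(\theta,\lambda) - \tilde\mu_0 A(\eta(\theta,\lambda)))$. Setting $\tilde\mu_1 = \mu_1 + T(x)$ and $\tilde\mu_0 = \mu_0 + n$ shows the posterior belongs to the same parametric family as the prior, with updated hyperparameters $\tilde\mu = (\mu_0+n,\,\mu_1+T(x))$. By Definition~\ref{sec:conjprior} this is precisely what it means for $\pi$ to be a conjugate prior, which concludes the argument.

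There is essentially no conceptual obstacle here: the exponential family was crafted precisely so that likelihood-prior products remain closed under multiplication, and the only mild subtlety is checking that $h(x)$ is genuinely independent of $\theta$ (so it can be discarded) and that $(\tilde\mu_0,\tilde\mu_1)$ still lies in the admissible hyperparameter region where the prior is normalizable — the latter being inherited from the convexity of $\Omega$ and of $A(\cdot)$ noted in the definition of the exponential family.
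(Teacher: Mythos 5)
Your proof is correct and follows essentially the same route as the paper: multiply the likelihood by the prior, absorb the $\theta$-independent factor $h(x)$, collect the terms that are affine in $\eta(\theta,\lambda)$ and $A(\eta(\theta,\lambda))$, and read off the updated hyperparameters $(\mu_0+n,\ \mu_1+T(x))$. The only cosmetic differences are that the paper writes the likelihood of $n$ i.i.d.\ observations explicitly (so the update uses $\sum_{j=1}^n T(x^j)$) and phrases the affine form of the prior's exponent as an ``iff'' condition, neither of which changes the substance of the argument.
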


The proof is given in appendix subsection \ref{proof:conjugate_priors}. As we can vary the parameterisation of the likelihood, we can obtain multiple conjugate priors. Because of the conjugacy, if the initial parameters of the multi variate Gaussian follows the prior, the posterior is the true distribution given the information $\mathcal{X}$ and stay in the same family making the update of the parameters really easy. Said differently, with conjugate prior, we make the optimal update.  And it is enlightening to see that as we get some information about the likelihood, our posterior distribution becomes more peak as shown in figure\ref{fig:conjsin}.

\begin{figure}[ht]
\centering
\includegraphics[width=.35\textwidth]{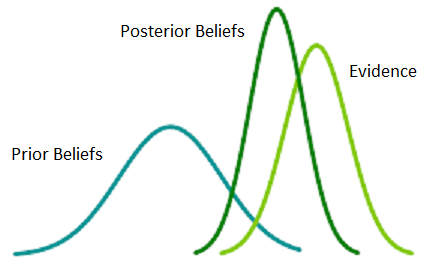}			
\caption{As we get more and more information using the likelihood, the posterior becomes more peak.}
\label{fig:conjsin}	
\end{figure}

\subsection{Optimal updates for NIW}
The two natural conjugate priors for the Multi variate normal that updates both the mean and the covariance are the normal-inverse-Wishart if we want to update the mean and covariance of the Multi variate normal or the normal-Wishart if we are interested in updating the mean and the precision matrix (which is the inverse of the covariance matrix). In this paper, we will stick to the normal-inverse-Wishart to keep things simple. The Normal-inverse-Wishart distribution is parametrized by $\boldsymbol\mu_0,\lambda,\boldsymbol\Psi,\nu$ and its distribution is given by
\[
f(\boldsymbol\mu,\boldsymbol\Sigma|\boldsymbol\mu_0,\lambda,\boldsymbol\Psi,\nu) = \mathcal{N}\left(\boldsymbol\mu\Big|\boldsymbol\mu_0,\frac{1}{\lambda}\boldsymbol\Sigma\right) \mathcal{W}^{-1}(\boldsymbol\Sigma|\boldsymbol\Psi,\nu)
\]
where $ \mathcal{W}^{-1}$ denotes the inverse Wishart distribution. The key theoretical guarantee of the BCMA-ES is to update the mean and covariance of our CMA-ES optimally as follows. 

\begin{proposition}\label{Bayesian_update}
If our sampling density follows a $d$ dimensional multivariate normal distribution $\sim \mathcal{N}_d(\boldsymbol\mu,\boldsymbol\Sigma)$ with unknown mean $\mu$ and covariance $\Sigma$ and if its parameters are distributed according to a Normal-Inverse-Wishart $
(\mu, \Sigma) \sim \mathrm{NIW}(\mu_0,\kappa_0,v_0,\psi)$ and if we observe $\mathcal{X}=(x_1,..,x_n)$ samples, then the posterior is also a Normal-Inverse-Wishart with different parameters $\mathrm{NIW}(\mu_0^\star,\kappa_0^\star,v_0^\star,\psi^\star)$ given by
\begin{equation}\label{eq:Bayesian_update}
\begin{split}
\mu_0^\star &= \frac{\kappa_0 \mu_0+n\overline{x}}{\kappa_0 +n}, \\
\kappa_0^\star &=\kappa_0+n, \\
v_0^\star &= v_0+n \\
\psi^\star &= \psi + \sum_{i=1}^n \left( x_i-\overline{x}\right) \left( x_i-\overline{x}\right)^T+\frac{\kappa_0 n}{\kappa_0 +n}\left(\overline{x}-\mu_0\right) \left(\overline{x}-\mu_0\right)^T
\end{split}
\end{equation}
with $\overline{x}$ the sample mean. 
\end{proposition}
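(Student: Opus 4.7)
The plan is a direct Bayes-rule computation: write the posterior as proportional to likelihood times prior, and show by elementary algebra that the result is again of Normal-Inverse-Wishart form with the stated hyperparameters. Concretely, the likelihood of the sample $\mathcal{X}=(x_1,\dots,x_n)$ is
\[
p(\mathcal{X}\mid\mu,\Sigma)\propto |\Sigma|^{-n/2}\exp\!\Big(-\tfrac{1}{2}\sum_{i=1}^n (x_i-\mu)^T\Sigma^{-1}(x_i-\mu)\Big),
\]
and the prior density $\mathrm{NIW}(\mu_0,\kappa_0,v_0,\psi)$ factors, up to constants, as
\[
\pi(\mu,\Sigma)\propto |\Sigma|^{-(v_0+d+2)/2}\exp\!\Big(-\tfrac{1}{2}\Tr(\psi\Sigma^{-1})-\tfrac{\kappa_0}{2}(\mu-\mu_0)^T\Sigma^{-1}(\mu-\mu_0)\Big).
\]
Multiplying these and collecting the exponents of $|\Sigma|$ already gives the updated degrees of freedom $v_0^\star=v_0+n$ (the exponent becomes $-(v_0+n+d+2)/2$).

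Next I would isolate the quadratic forms appearing inside the exponential and split them into a piece depending on $\mu$ and a piece depending only on $\Sigma$. The first key identity is the standard scatter decomposition
\[
\sum_{i=1}^n (x_i-\mu)(x_i-\mu)^T \;=\; S + n(\bar{x}-\mu)(\bar{x}-\mu)^T,
\]
where $S=\sum_i(x_i-\bar x)(x_i-\bar x)^T$; taking traces against $\Sigma^{-1}$ transfers the sum in the likelihood into a $\Sigma^{-1}$-weighted quadratic in $\mu-\bar x$ plus the sufficient statistic $S$. Combining with the prior term gives, inside the exponential, the expression
\[
-\tfrac{1}{2}\Tr\bigl((\psi+S)\Sigma^{-1}\bigr)-\tfrac{1}{2}\bigl[\kappa_0(\mu-\mu_0)^T\Sigma^{-1}(\mu-\mu_0)+n(\mu-\bar{x})^T\Sigma^{-1}(\mu-\bar{x})\bigr].
\]

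The central algebraic step, and the place where care is required, is completing the square in $\mu$ for the bracketed term. Writing $\kappa_0^\star=\kappa_0+n$ and $\mu_0^\star=(\kappa_0\mu_0+n\bar x)/\kappa_0^\star$, a direct expansion yields the matrix identity
\[
\kappa_0(\mu-\mu_0)(\mu-\mu_0)^T+n(\mu-\bar x)(\mu-\bar x)^T
=\kappa_0^\star(\mu-\mu_0^\star)(\mu-\mu_0^\star)^T+\tfrac{\kappa_0 n}{\kappa_0+n}(\bar x-\mu_0)(\bar x-\mu_0)^T,
\]
which can be verified by expanding both sides and matching the coefficients of $\mu\mu^T$, the linear term in $\mu$, and the constant; the cross-term vanishes exactly because of the convex-combination definition of $\mu_0^\star$. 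This identity, once sandwiched with $\Sigma^{-1}$, separates the $\mu$-dependence cleanly from the $\Sigma$-only remainder.

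Finally I would reassemble the pieces: the $\mu$-dependent factor gives $\mathcal{N}(\mu\mid\mu_0^\star,\Sigma/\kappa_0^\star)$, while the remaining $\Sigma$-only factor is proportional to
\[
|\Sigma|^{-(v_0^\star+d+2)/2}\exp\!\bigl(-\tfrac{1}{2}\Tr(\psi^\star\Sigma^{-1})\bigr)
\]
with $\psi^\star=\psi+S+\tfrac{\kappa_0 n}{\kappa_0+n}(\bar x-\mu_0)(\bar x-\mu_0)^T$, i.e.\ an inverse-Wishart kernel with parameters $(\psi^\star,v_0^\star)$. Because the product is, up to a constant, again an NIW density, and NIW densities integrate to one for a unique choice of hyperparameters, conjugacy forces the normalizing constant to be exactly that of $\mathrm{NIW}(\mu_0^\star,\kappa_0^\star,v_0^\star,\psi^\star)$, which gives the announced update. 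The main obstacle is purely algebraic, namely the matrix completion of the square used to obtain $\mu_0^\star$ and the extra rank-one term in $\psi^\star$; everything else is bookkeeping of the $|\Sigma|$ exponents and the trace-versus-quadratic-form rewriting.
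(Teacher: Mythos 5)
Your proof is correct and takes essentially the same route as the paper's: a direct prior-times-likelihood computation using the scatter decomposition and a completion of the square in $\mu$ to identify the $\mathrm{NIW}(\mu_0^\star,\kappa_0^\star,v_0^\star,\psi^\star)$ kernel, the only difference being that the paper first works out the one-dimensional Normal-Inverse-Gamma case and leaves the multivariate square-completion implicit (``we organize the terms''), which you state and verify explicitly. One trivial bookkeeping remark: after extracting the full factor $\mathcal{N}(\mu\mid\mu_0^\star,\Sigma/\kappa_0^\star)$, which already carries a $|\Sigma|^{-1/2}$, the remaining $\Sigma$-only factor has exponent $-(v_0^\star+d+1)/2$ rather than $-(v_0^\star+d+2)/2$, and it is this factor that is the inverse-Wishart kernel with parameters $(\psi^\star,v_0^\star)$.
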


\begin{remark}
This proposition is the cornerstone of the BCMA-ES. It provides the theoretical guarantee that the updates of the parameters in the algorithm are accurate and optimal under the assumption of the prior. In particular, this implies that any other formula for the update of the mean and variance and in particular the ones used in the mainstream CMA-ES assumes a different prior. 
\end{remark}

\begin{proof}
A complete proof is given in the appendix section \ref{proof:Normal inverse Wishart}. 
\end{proof}

%%%%%%%%%%%%%%%%
%% CMA-ES %%%%%%%%%%
%%%%%%%%%%%%%%%%
\section{Bayesian CMA-ES}\label{sec:algorithms}

\subsection{Main assumptions}
Our main assumptions are the followings : 
\begin{itemize}
\item the parameters of the multi-variate Gaussian follow a conjugate prior distribution.
\item the minimum of our objective function $f$ follows a multi-variate normal law.
\end{itemize}

\subsection{Simulating the minimum}
One of the main challenge is to simulate the likelihood to infer the posterior. The key question is really to use the additional information of the function value $f$ for candidate points.
At step $t$ in our algorithm, we suppose multi variate Gaussian parameters $\mu$ and $\Sigma$ follow a normal inverse Wishart denoted by $NIW(\mu_{t},\kappa_{t},v_{t},\psi_{t})$. 

In full generality, we need to do a Monte Carlo of Monte Carlo as the parameters of our multi variate normal are themselves stochastic. However, we can simplify the problem and take their mean values. It is very effective in terms of computation and reduces Monte Carlo noise. For the normal inverse Wishart distribution, there exist closed form for these mean values given by:
\begin{equation}
\mathbb{E}_{t}[\mu] = \mu_{t}
\end{equation}
and
\begin{equation}
\mathbb{E}_{t}[\Sigma] = \frac{\psi_{t}}{v_{t} - n-1}
\end{equation}
We  simulate potential candidates $\mathcal{X}= \{X_i \} \sim \mathcal{N}\big(\mathbb{E}_{t}[\mu], \mathbb{E}_{t}[\Sigma] \big)$ and evaluate them $f(X_i)$. If the distribution of the minimum was accurate, the minimum would concentrate around $\mathbb{E}_{t}[\mu]$ and be spread with a variance of $\mathbb{E}_{t}[\Sigma] $. When evaluating potential candidates, as our guess is not right, we do not get values centered around 
$\mathbb{E}_{t}[\mu]$ and spread with a variance of $ \mathbb{E}_{t}[\Sigma]$. This comes from three things:
\begin{itemize}
\item Our assumed minimum is not right. We need to shift our normal to the right minimum!
\item Our assumed variance is not right. We need to compute it on real data taken into additional information given by $f$. 
\item Last but not least, our Monte Carlo simulation adds some random noise.
\end{itemize}

For the last issue, we can correct any of our estimator by the Monte Carlo bias. This can be done using standard control variate as the simulated mean and variance are given: 
$\mathbb{E}_{t}[\mu]$  and $\mathbb{E}_{t}[\Sigma]$ respectively and we can compute for each of them the bias explicitly. 

The first two issues are more complex. Let us tackle each issue one by one. 

To recover the true minimum, we design two strategies. 
\begin{itemize}
\item We design a strategy where we rebuild our normal distribution but using sorted information of our $X$'s weighted by their normal density to ensure this is a true normal corrected from the Monte Carlo bias. We need to explicitly compute the weights. 
For each simulated point $X_i$, we compute it assumed density denoted by $d_i = \mathcal{N}(\mathbb{E}_{t}[\mu], \mathbb{E}_{t}[\Sigma] )(X_i)$ where $\mathcal{N}(\mathbb{E}_{t}[\mu], \mathbb{E}_{t}[\Sigma] )(.)$ denotes the p.d.f. of the multi-variate Gaussian. 

We divide these density by their sum to get weights $(w_i)_{i=1..k}$ that are positive and sum to one as follows. $w_j = d_j / \sum_{i=1}^k d_i$. 
Hence for $k$ simulated points, we get $\{X_i, w_i \}_{i = 1..k}$. We reorder jointly the uplets (points and density) in terms of their weights in decreasing order. 

To insist we take sorted value in decreasing order with respect to the weights $(w_i)_{i=1..k}$, we denote the order statistics $(i), w\downarrow$. 

This first sorting leads to k new  uplets $ \{X_{(i), w\downarrow}, w_{(i), w\downarrow}\}_{i = 1..k}$. 
Using a \emph{stable} sort (that keeps the order of the density), we sort jointly the uplets (points and weights)  according to their objective function value (in increasing order this time) and get a k new uplets $ \{X_{(i), f\uparrow}, w_{(i), w\downarrow}\}_{i = 1..k}$. We can now compute the empirical mean $\overline{\mu}_{t}$ as follows:
\begin{equation}\label{eq:weighted_mean}
\overline{\mu}_{t}=  \underbrace{\sum_{i=1}^k  {w_{(i), w\downarrow}} \cdot  X_{(i), f\uparrow}}_{\mathrm{MC\,mean\,for\,} X_{f\uparrow}} - \underbrace{\left(\sum_{i=1}^k  w_i X_i - \overline{\mu}_{t} \right)}_{\mathrm{MC\,bias\,for\,} X}
\end{equation}
The intuition of equation \eqref{eq:weighted_mean} is to compute in the left term the Monte Carlo mean using reordered points according to their objective value and correct our initial computation by the Monte Carlo bias computed as the right term, equal to the initial Monte Carlo mean minus the real mean. We call this strategy one.

\item If we think for a minute about the strategy one, we get the intuition that when starting the minimization, it may not be optimal. This is because weights are proportional to $\exp\left\{ \frac{1}{2} (X-\mathbb{E}_{t}[\mu])^T (\mathbb{E}_{t}[\Sigma])^{-1} (X-\mathbb{E}_{t}[\mu])\right\}$. 

When we start the algorithm, we use a large search space, hence a large covariance matrix $\overline{\Sigma}_{t}$ which leads to have weights which are quite similar. Hence even if we sort candidates by their fit, ranking them according to the value of $f$ in increasing order, we will move our theoretical multi variate Gaussian little by little. A better solution is more to brutally move the center of our multi variate Gaussian to the best candidate seen so far, as follows:
\begin{equation}\label{eq:simple_mean}
\overline{\mu}_{t} = \argmin_{X \in \mathcal{X}} f(X)
\end{equation} 
We call this strategy two. Intuitively, strategy two should be best when starting the algorithm while strategy one would be better once we are close to the solution.
\end{itemize}

To recover the true variance, we can adapt what we did in stratey one as follows:
\begin{itemize}
\item \begin{eqnarray}\label{eq:weighted_cov}
 \overline{\Sigma}_{t}  = & \underbrace{{\sum_{i=1}^k  {w_{(i), w\downarrow}} \cdot  \left(X_{(i), f\uparrow} - \overline{X}_{(.), f\uparrow}\right) \left(X_{(i), f\uparrow} - \overline{X}_{(.), f\uparrow}\right)^T }{}}_{\mathrm{MC\,covariance \,for\,} X_{f\uparrow}} \nonumber \\
&\!\!\!\!\!\!\!\! - \underbrace{\left(  {\sum_{i=1}^k  {w_{i}} \! \cdot \! \left(X_{i}- \overline{X} \right) \left(X_{i} - \overline{X} \right)^T  }{} \!- \!\overline{\Sigma}_{t}   \right)}_{\mathrm{MC\,covariance\,for\,simulated\,} X} \,\,\,\,\,
\end{eqnarray}
where $ \overline{X}_{(.), f\uparrow} = \sum_{i=1}^k w_{(i), w\downarrow}X_{(i), f\uparrow}$ and $\overline{X}=\sum_{i=1}^k w_{i} X_{i}$ are respectively the mean of the sorted and non sorted points.
\item Again, we could design another strategy that takes part of the points but we leave this to further research.
\end{itemize}

Once we have the likelihood mean and variance using \eqref{eq:simple_mean} and \eqref{eq:weighted_cov} or \eqref{eq:weighted_mean} and \eqref{eq:weighted_cov}, we update the posterior law according to equation \eqref{eq:Bayesian_update}. This gives us the iterative conjugate prior parameters updates:
\begin{equation}\label{eq:Bayesian_update2}
\begin{split}
\mu_{t+1} &= \frac{\kappa_{t} \mu_{t}+ n \overline{\mu}_{t} }{\kappa_{t} +n}, \\
\kappa_{t+1}&=\kappa_{t}+ n, \\
v_{t+1} &= v_{t}+n, \\
\psi_{t+1} & = \psi_{t} \! +\!  \overline{\Sigma}_{t}  \!+\!\frac{\kappa_{t} n}{\kappa_{t} +n}\! \left( \overline{\mu}_{t}   -\mu_{t} \right) \left(  \overline{\mu}_{t}   -\mu_{t}\right)^T
\end{split}
\end{equation}

The resulting algorithm is summarized in Algo \ref{algo:BayesianCMAES}.

\begin{proposition}\label{relation_update}
Under the assumption of a NIW prior, the updates of the BCMA-ES parameters for the expected mean and variance write as a weighted combination of the prior expected mean and variance and the empirical mean and variance as follows
\begin{equation}\label{eq:weighted_sum}
\begin{split}
\mathbb{E}_{t+1}[\mu]  &= \mathbb{E}_{t}[\mu ]   + w_{t}^{\mu} \left(  \overline{\mu}_{t}  - \mathbb{E}_{t}[\mu ]\right),\\
\mathbb{E}_{t+1}[\Sigma]  &= \hspace{-0.45cm}  \underbrace{w_{t}^{\Sigma,1}}_{\text{discount factor}} \hspace{-0.5cm} \mathbb{E}_{t}[\Sigma]  +   w_{t}^{\Sigma,2}  \underbrace{\left(  \overline{\mu}_{t}   - \mathbb{E}_{t}[\mu ] \right) \left(  \overline{\mu}_{t}  - \mathbb{E}_{t}[\mu ]\right)^T}_{\text{rank one matrix}} \\
& \hspace{1.5cm} +  {w_{t}^{\Sigma,3}} \hspace{-0.5cm}   \underbrace{\overline{\Sigma}_{t}}_{\text{rank (n-1) matrix}} \\
 \mathrm{where \qquad } w_{t}^{\mu}  & =  \frac{n}{\kappa_{t} +n}, \\
 w_{t}^{\Sigma,1} & = \frac{\kappa_{t} n}{(\kappa_{t} +n)( v_{t}-1)} \\
 w_{t}^{\Sigma,2} & =  \frac{ v_{t} - n -1}{ v_{t}-1},  \\
 w_{t}^{\Sigma,3} & =  \frac{1}{ v_{t}-1}
\end{split}
\end{equation}
\end{proposition}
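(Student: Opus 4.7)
The plan is to prove both identities by direct substitution of the Bayesian parameter update rules in \eqref{eq:Bayesian_update2} (which is just Proposition \ref{Bayesian_update} iterated) into the closed-form NIW expectations $\mathbb{E}_t[\mu] = \mu_t$ and $\mathbb{E}_t[\Sigma] = \psi_t/(v_t - n - 1)$ already recorded in Section \ref{sec:algorithms}. No new probabilistic machinery is needed: the whole argument is bookkeeping on the four NIW hyperparameters and one algebraic identity.

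For the mean, I would rewrite the convex combination appearing in $\mu_{t+1}$ as
\[
\mathbb{E}_{t+1}[\mu] \;=\; \mu_{t+1} \;=\; \frac{\kappa_t \mu_t + n\,\overline{\mu}_t}{\kappa_t + n} \;=\; \mu_t + \frac{n}{\kappa_t + n}\bigl(\overline{\mu}_t - \mu_t\bigr),
\]
then identify $\mu_t$ with $\mathbb{E}_t[\mu]$ and read off $w_t^{\mu} = n/(\kappa_t + n)$. This is essentially one line.

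For the covariance, the key algebraic observation is that $v_{t+1} - n - 1 = (v_t + n) - n - 1 = v_t - 1$, so the denominator of the posterior expectation collapses nicely:
\[
\mathbb{E}_{t+1}[\Sigma] \;=\; \frac{\psi_{t+1}}{v_{t+1} - n - 1} \;=\; \frac{\psi_{t+1}}{v_t - 1}.
\]
I would then expand $\psi_{t+1}$ via \eqref{eq:Bayesian_update2} into its three summands ($\psi_t$, the rank-one term $\frac{\kappa_t n}{\kappa_t + n}(\overline{\mu}_t - \mu_t)(\overline{\mu}_t - \mu_t)^T$, and $\overline{\Sigma}_t$), substitute $\psi_t = (v_t - n - 1)\mathbb{E}_t[\Sigma]$ in the first summand, and divide each of the three terms by $v_t - 1$. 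Reading off coefficients yields the triple $\bigl((v_t - n - 1)/(v_t - 1),\; \kappa_t n/[(\kappa_t + n)(v_t - 1)],\; 1/(v_t - 1)\bigr)$, matching the three weights stated in the proposition (once each weight is attached to its corresponding matrix factor in the decomposition).

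The main pitfall, rather than a mathematical obstacle, is notational: the symbol $n$ plays two roles here — the sample size in the update rule and the dimension implicit in the inverse-Wishart expectation — and one must be careful that the shift $v_{t+1} = v_t + n$ is the one that makes the denominators of $\mathbb{E}_t[\Sigma]$ and $\mathbb{E}_{t+1}[\Sigma]$ compatible. Once this is granted, each line of the proof is a single algebraic rearrangement, so I would expect the write-up to be short and mechanical.
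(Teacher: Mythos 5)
Your proof is correct and follows essentially the same route as the paper's own appendix proof (subsection \ref{proof:weighted_combination}): substitute the update \eqref{eq:Bayesian_update2} into the NIW expectations $\mathbb{E}_{t}[\mu]=\mu_{t}$ and $\mathbb{E}_{t}[\Sigma]=\psi_{t}/(v_{t}-n-1)$, use $v_{t+1}-n-1=v_{t}-1$, and read off the coefficients term by term. Your parenthetical caution about attaching each weight to its proper matrix factor is well placed: the coefficients you derive --- $(v_{t}-n-1)/(v_{t}-1)$ on $\mathbb{E}_{t}[\Sigma]$, $\kappa_{t}n/\bigl[(\kappa_{t}+n)(v_{t}-1)\bigr]$ on the rank-one term, $1/(v_{t}-1)$ on $\overline{\Sigma}_{t}$ --- agree with the paper's appendix derivation, whereas the proposition statement itself swaps the labels $w_{t}^{\Sigma,1}$ and $w_{t}^{\Sigma,2}$ relative to where they appear in the displayed decomposition.
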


\begin{remark}
The proposition above is quite fundamental. It justifies that under the assumption of NIW prior, the update is a weighted sum of previous expected mean and covariance. It is striking that it provides very similar formulae to the standard CMA ES update. Recall that these updates given for the mean $m_{t}$ and covariance $C_{t}$ can be written as follows:
\begin{equation}\label{eq:std_cmaes}
\begin{split}
m_{t+1} &= m_{t}+\sum _{{i=1}}^{{\mu }}w_{i}\,(x_{{i:\lambda }}-m_{t}) \\
C_{{t+1}}&=\underbrace {(1-c_{1}-c_{\mu }+c_{s})}_{{\!\!\!\!\!{\text{discount factor}}\!\!\!\!\!}}\,C_{t}+c_{1}\underbrace {p_{c}p_{c}^{T}}_{{\!\!\!\!\!\!\!\!\!\!\!\!\!\!\!\!{\text{rank one matrix}}\!\!\!\!\!\!\!\!\!\!\!\!\!\!\!\!}} \\
& \qquad +\,c_{\mu }\underbrace {\sum _{{i=1}}^{\mu }w_{i}{\frac  {x_{{i:\lambda }}-m_{k}}{\sigma _{k}}}\left({\frac  {x_{{i:\lambda }}-m_{t}}{\sigma _{t}}}\right)^{T}}_{{{\text{rank}}\;\min(\mu ,n-1)\;{\text{matrix}}}}
\end{split}
\end{equation}
where the notations $m_{t}, w_{i},  x_{i:\lambda },C_{t}, c_{1}, c_{\mu }, c_{s}, \mathrm{etc} ...$ are given for instance in \cite{wiki:CMAES}.
\end{remark}

\begin{proof}
See \ref{proof:weighted_combination} in the appendix section.
\end{proof}

\begin{algorithm}[!ht]
\caption{Predict and Correct parameters at step t} \label{algo:BayesianCMAES}
\begin{algorithmic} [1]
\STATE \textbf{Simulate candidate}
\STATE Use mean values $\mathbb{E}_{t}[\mu] = \mu_{t}$ and $\overline{\Sigma}_{t}=\E[\Sigma] = \psi_{t}/(v_{t} - n-1)$
\STATE Simulate k points $\mathcal{X}= \{X_i \}=1..k \sim \mathcal{N}(\mathbb{E}_{t}[\mu], \overline{\Sigma}_{t})$
\STATE Compute densities $(d_i)_{i ..k} = (\mathcal{N}(\mathbb{E}_{t}[\mu],\overline{\Sigma}_{t})(X_i) )_{i ..k} = $
\STATE Sort in decreasing order with respect to $d$ to get $ \{X_{(i), d\downarrow}, d_{(i), d\downarrow}\}_{i = 1..k}$
\STATE Stable Sort in increasing order order with respect to $f(X_i)$ to get $\{X_{(i), f\uparrow}, d_{(i), d\downarrow}\}_{i = 1..k}$
\\ 
\STATE 
\STATE \textbf{Correct $\mathbb{E}_{t}[\mu]$ and $ \overline{\Sigma}_{t}$}
\STATE Either Update $\mathbb{E}_{t}[\mu]$ and $ \overline{\Sigma}_{t} $ using \eqref{eq:simple_mean} and \eqref{eq:weighted_cov}  \textbf{(strategy two)} 
\STATE Or Update $\mathbb{E}_{t}[\mu]$ and $ \overline{\Sigma}_{t} $ using \eqref{eq:weighted_mean} and \eqref{eq:weighted_cov}  \textbf{(strategy one)} 
\STATE Update $\mu_{n+1},  \kappa_{n+1}, v_{n+1}, \psi_{n+1}$ using \eqref{eq:Bayesian_update2}
\end{algorithmic}
\end{algorithm}

\subsection{Particularities of Bayesian CMA-ES}
There are some subtleties that need to be emphasized. 
\begin{itemize}
\item Although we assume a prior, we do not need to simulate the prior but can at each step use the expected value of the prior which means that we do not consume additional simulation compared to the standard CMA-ES.
\item We need to tackle local minimum (we will give example of this in the numerical section) to avoid being trapped in a bowl!  If we are in a local minimum, we need to inflate the variance to increase our search space. We do this whenever our algorithm does not manage to decrease. However, if after a while we do not get better result, we assume that this is indeed not a local minimum but rather a global minimum and start deflating the variance. This mechanism of inflation deflation ensures we can handle noisy functions like Rastrigin or Schwefel 1 or Schwefel 2 functions as defined in the section \ref{sec:experiments}.
\end{itemize}

\subsection{Differences with standard CMA-ES}
Since we use a rigorous derivation of the posterior, we have the following features:
\begin{itemize}
\item the update of the covariance takes all points. This is different from $\lambda/ \mu$ CMA-ES that uses only a subset of the point.
\item by design, the update is optimal as we compute at each step the posterior.
\item the contraction dilatation mechanism is an alternative to global local search path in standard CMA-ES.
\item weights varies across iterations which is also a major difference between main CMA ES and Bayesian CMA ES. Weights are proportional to $\exp( \frac{1}{2} X^T \Sigma^{-1} X)$ sorted in decreasing order. Initially, when the variance is large, 
\end{itemize}

\subsection{Full algorithm}
The complete Bayesian CMA ES algorithm is summarized in \ref{algo:BayesianCMAES_v2}. It iterates until a stopping condition is met. We use multiple stopping conditions. We stop if we have not increase our best result for a given number of iterations. We stop if we have reached the maximum of our iterations. We stop if our variance norm is small. Additional stopping condition can be incorporated easily. 

\begin{algorithm}[!ht]
\caption{Bayesian update of CMA-ES parameters:} \label{algo:BayesianCMAES_v2}
\begin{algorithmic} [1]
\STATE \textbf{Initialization}
\STATE Start with a prior distribution $\Pi$ on $\mu$ and $\Sigma$
\STATE Set retrial to 0
\STATE Set $f_{min}$ to max float
\WHILE{stop criteria not satisfied}
	\STATE $X \sim \mathcal{N}(\mu,\Sigma)$
	\STATE update the parameters of the Gaussian thanks to the posterior law $\Pi(\mu,\Sigma |X)$ following details given in algorithm \ref{algo:BayesianCMAES}
	\STATE Handle dilatation contraction variance for local minima as explained in  algorithm \ref{algo:DilatationContraction}
	\IF{ DilateContractFunc($X, \overline{\Sigma}_{t}, X_{min},f_{min}, \overline{\Sigma}_{t,min}$) == 1}
		\RETURN best solution
	\ENDIF
\ENDWHILE
\RETURN best solution
\end{algorithmic}
\end{algorithm}

Last but not least, we have a dilatation contraction mechanism for the variance to handle local minima with multiple level of contractions and dilatation that is given in function \ref{algo:DilatationContraction}. The overall idea is first to dilate variance if we do not make any progress to increase the search space so that we are not trapped in a local minimum. Should this not succeed, it means that we are reaching something that looks like the global minimum and we progressively contract the variance. In our implemented algorithm, we take $L_1=5, L_2=20, L_3=30, L_4=40, L_5=50$ and the dilatation, contraction parameters given by $k_1=1.5, k_2=0.9, k_3=0.7, k_5=0.5$ We have also a restart at previous minimum level $L_{*}=L_2$.

\begin{algorithm}[!ht]
\caption{Dilatation contraction variance for local minima:} \label{algo:DilatationContraction}
\begin{algorithmic}[1]
\STATE \textbf{Function} DilateContractFunc($X, \overline{\Sigma}_{t}, X_{min},f_{min},\overline{\Sigma}_{t,min}$)
	\IF{$f(X) \leq f_{min}$}	
		\STATE Set $f_{min}= f(X)$ 
		\STATE Memorize current point and its variance: 
		\STATE \hspace{0.3cm}  {\tiny$\bullet$} $X_{min}= X$
		\STATE \hspace{0.3cm}  {\tiny$\bullet$} $\overline{\Sigma}_{t,min}= \overline{\Sigma}_{t}$
		\STATE Set retrial = 0
	\ELSE
		\STATE Set retrial += 1
		\IF{retrial == $L_{*}$}
			\STATE Restart at previous best solution: 
			\STATE \hspace{0.2cm} {\tiny$\bullet$} $X = X_{min}$
			\STATE \hspace{0.2cm} {\tiny$\bullet$} $\overline{\Sigma}_{t}= \overline{\Sigma}_{t,min}$
		\ENDIF

		\IF{$L_2$ > retrial and retrial > $L_1$}
			\STATE Dilate variance by $k_1$
			
		\ELSIF{$L_3$ > retrial and retrial $\geq$ $L_2$}
			\STATE Contract variance by $k_2$
		\ELSIF{$L_4$ > retrial and retrial $\geq$  $L_3$}
			\STATE Contract variance by $k_3$
		\ELSIF{$L_5$ > retrial and retrial $\geq$  $L_4$}
			\STATE Contract variance by $k_4$
		\ELSE
			\RETURN 1
		\ENDIF
	\RETURN 0
	\ENDIF
\STATE \textbf{End Function}
\end{algorithmic}
\end{algorithm}

\section{Numerical results}\label{sec:experiments}
\subsection{Functions examined}
We have examined four functions to stress test our algorithm. They are listed in increasing order of complexity for our algorithm and correspond to different type of functions. They are all generalized function that can defined for any dimension $n$. For all, we present the corresponding equation for a variable $x=(x_1,x_2, .., x_n)$ of $n$ dimension. Code is provided in supplementary materials. We have frozen seeds to have \emph{reproducible results}.

\subsubsection{Cone}
The most simple function to optimize is the quadratic cone whose equation is given by \eqref{eq:function_Cone} and represented in figure \ref{fig:cone}. It is also the standard Euclidean norm. It is obviously convex and is a good test of the performance of an optimization method.
\begin{equation}\label{eq:function_Cone}
f(x)= \left( \sum_{i=1}^n x_i^2 \right)^{1/2} = \| x \|_2
\end{equation}

\begin{figure}[!ht]
\centering
\includegraphics[width=5.5cm]{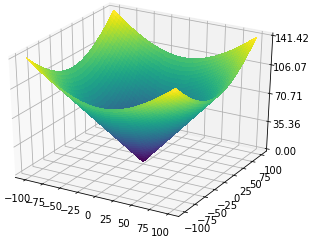}
\caption{A simple convex function: the quadratic norm. Minimum in 0} \label{fig:cone}
\end{figure}

\subsubsection{Schwefel 2 function}
A slightly more complicated function is the Schwefel 2 function whose equation is given by \eqref{eq:function_Schwefel2} and represented in figure \ref{fig:Schwefel2}. It is a piecewise linear function and validates the algorithm can cope with non convex function.

\begin{equation}\label{eq:function_Schwefel2}
f(x)= \sum_{i=1}^n \mid x_i\mid + \prod_{i=1}^n\mid x_i \mid
\end{equation} 

\begin{figure}[!ht]
\centering
\includegraphics[width=5.5cm]{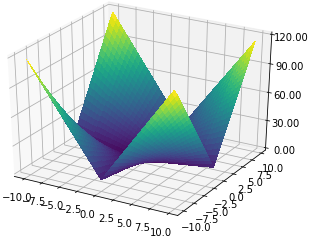}
\caption{Schwefel 2 function: a simple piecewise linear function} \label{fig:Schwefel2}
\end{figure}

\subsubsection{Rastrigin}
The Rastrigin function, first proposed by \cite{Rastrigin_1974} and generalized by \cite{Muhlenbein_1991}, is more difficult compared to the Cone and the Schwefel 2 function. Its equation is given by \eqref{eq:function_Rastrigin} and represented in figure \ref{fig:Rastrigin}. It is a non-convex function often used as a performance test problem for optimization algorithms. It is a typical example of non-linear multi modal function. Finding its minimum is considered a good stress test for an optimization algorithm, due to its large search space and its large number of local minima. 

\begin{figure}[!ht]
\centering
\includegraphics[width=5.5cm]{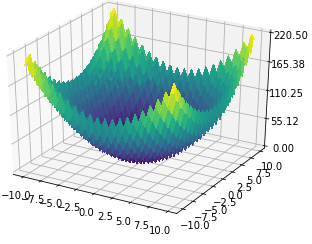}
\caption{Rastrigin function: a non convex function multi-modal and with a large number of local minima} \label{fig:Rastrigin}
\end{figure}

\begin{equation}\label{eq:function_Rastrigin}
 f(x)=10 \times n + \sum_{i=1}^n \left[ x_i^2-10 \cos(2\pi x_i)\right]
\end{equation}

\subsubsection{Schwefel 1 function}
The last function we tested is the Schwefel 1 function whose equation is given by \eqref{eq:function_Schwefel1} and represented in figure \ref{fig:Schwefel1}. It is sometimes only defined on $\left[-500, 500 \right]^n$. The Schwefel 1 function shares similarities with the Rastrigin function. It is continuous, not convex, multi-modal and with a large number of local minima. The extra difficulty compared to the Rastrigin function, the local minima are more pronounced local bowl making the optimization even harder.

\begin{eqnarray}\label{eq:function_Schwefel1}
&& \hspace{-0.5cm} f(x) = 418.9829\, \times \, n \nonumber \\
&& - \sum_{i=1}^n  \left[ x_i \sin(\sqrt{\mid x_i \mid}) \mathbbm{1}_{|x_i|< 500} + 500 \sin(\sqrt{500})  \mathbbm{1}_{|x_i| \geq 500} \right] \qquad
\end{eqnarray}

\begin{figure}[!ht]
\centering
\includegraphics[width=5.5cm]{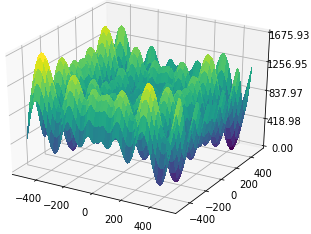}
\caption{Schwefel 1 function: a non convex function multi-modal and with a large number of local pronounced bowls} \label{fig:Schwefel1}
\end{figure}

\subsection{Convergence}
For each of the functions, we compared our method using strategy one entitled \emph{B-CMA-ES S1}: update $\overline{\mu}_{t}$ and $ \overline{\Sigma}_{t} $ using \eqref{eq:weighted_mean} and \eqref{eq:weighted_cov} plotted in \emph{orange}, or strategy two \emph{B-CMA-ES S2}: same update but using \eqref{eq:simple_mean} and \eqref{eq:weighted_cov}, plotted in \emph{blue} and standard CMA-ES as provided by the opensource python package pycma plotted in \emph{green}. We clearly see that strategies one and two are quite similar to standard CMA-ES. The convergence graphics that show the error compared to the minimum are represented: 
\begin{itemize}
\item for the cone function by figure \ref{fig:convergence_cone} (case of a convex function), with initial point $(10,10)$
\item for the Schwefel 2 function in figure \ref{fig:convergence_Schwefel2} (case of piecewise linear function), with initial point $(10,10)$
\item for the Rastrigin function in figure \ref{fig:convergence_Rastrigin} (case of a non convex function with multiple local minima), with initial point $(10,10)$
\item and for the Schwefel 1 function in figure \ref{fig:convergence_Schwefel1} (case of a non convex function with multiple large bowl local minima), with initial point $(400,400)$
\end{itemize}
The results are for one test run. In a forthcoming paper, we will benchmark them with more runs to validate the interest of this new method.
\begin{figure}[!ht]
\centering
\includegraphics[height=4.5cm]{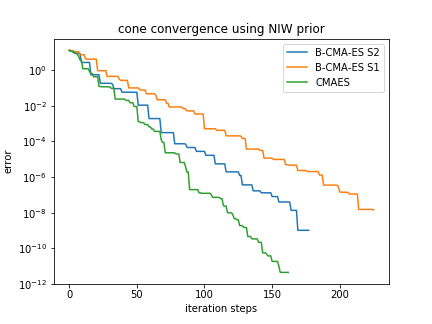}
\caption{Convergence for the Cone function}
\label{fig:convergence_cone}
\end{figure}

\begin{figure}[!ht]
\centering
\includegraphics[height=4.5cm]{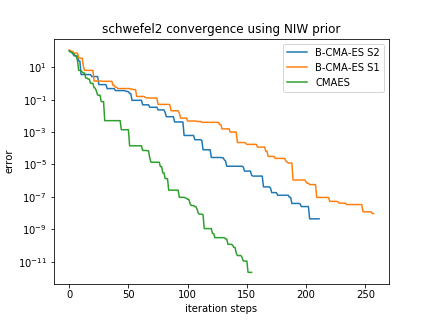}
\caption{Convergence for the Schwefel 2 function}
\label{fig:convergence_Schwefel2}
\end{figure}

\begin{figure}[!ht]
\centering
\includegraphics[height=4.5cm]{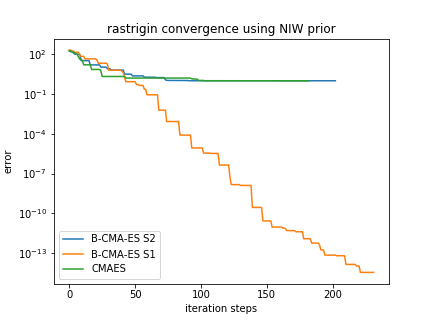}
\caption{Convergence for the Rastrigin function}
\label{fig:convergence_Rastrigin}
\end{figure}

\begin{figure}[!ht]
\centering
\includegraphics[height=4.5cm]{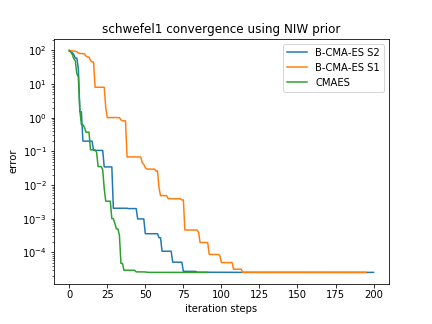}
\caption{Convergence for the Schwefel 1 function}
\label{fig:convergence_Schwefel1}
\end{figure}

For the four functions, BCMAES achieves convergence similar to standard CMA-ES. The intuition of this good convergence is that shifting the multi variate mean by the best candidate seen so far is a good guess to update it at the next run (standard CMA-ES or B-CMA-ES S1).

\section{Conclusion}
In this paper, we have revisited the CMA-ES algorithm and provided a Bayesian version of it. Taking conjugate priors, we can find optimal update for the mean and covariance of the multi variate Normal. We have provided the corresponding algorithm that is a new version of CMA-ES. First numerical experiments show this new version is competitive to standard CMA-ES on traditional functions such as cone, Schwefel 1, Rastrigin and Schwefel 2. This faster convergence can be explained on a theoretical side from an optimal update of the prior (thanks to Bayesian update) and the use of the best candidate seen at each simulation to shift the mean of the multi-variate Gaussian likelihood. We envisage further works to benchmark our algorithm to other standard evolutionary algorithms, in particular to use the COCO platform to provide more meaningful tests and confirm the theoretical intuition of good performance of this new version of CMA-ES, and to test the importance of the prior choice. 

%%%%%%%%%%%%%%%%%%
%%%% Appendix %%%%%%%%%
%%%%%%%%%%%%%%%%%%
%\newpage
\section{Appendix}
\subsection{Conjugate priors}\label{proof:conjugate_priors}
\begin{proof}
Consider $n$ independent and identically distributed (IID) measurements $\mathcal{X}\triangleq\{ \x^j\in \mathbb{R}^d | 1\leq j \leq n\}$ and assume that these variables have an exponential family density. The likelihood $p(\mathcal{X}|\theta,\lambda)$ writes simply as the product of each individual likelihood:
\begin{align}
\hspace{-0.3cm} p(\mathcal{X}|\theta,\lambda) \! = \!\Big(\prod_{j=1}^{n}h(\x^j)\Big) \exp\Big(\eta(\theta,\lambda)^T \sum_{j=1}^n T(x^j)-n A(\eta(\theta,\lambda))\Big).  \label{eq:ExponentialFamilyLikelihood}
\end{align}
If we start with a prior $\pi(\theta)$ of the form $\pi( \theta )\propto \exp (\mathcal{F}(\theta))$ for some function $\mathcal{F}(\cdot)$, its posterior writes: 
\begin{align}
\pi( \theta | \mathcal{X} ) &\propto p( \mathcal{X} | \theta )  \exp (\mathcal{F}(\theta)) \nonumber \\
&\propto  \exp \left( \eta(\theta, \lambda) \cdot \sum_{j=1}^n T(x^j)-n A(\eta(\theta,\lambda)) +\mathcal{F}(\theta)\right).\label{eq:ExponentialFamilyPosterior-expanded}
\end{align}
It is easy to check that the posterior \eqref{eq:ExponentialFamilyPosterior-expanded} is in the same exponential family as the prior iff $\mathcal{F}(\cdot)$ is in the form:
\begin{align}
\mathcal{F}(\theta)= \mu_1\cdot \eta(\theta,\lambda) - \mu_0 A(\eta(\theta,\lambda))
\end{align} 
for some $\mu\triangleq(\mu_0, \mu_1)$, such that: 
\begin{align}
\!\!\!\! p(\mathcal{X}|\theta,\lambda)  \! \propto \! \exp\Big(\Big(\mu_1+ \sum_{j=1}^n T(x^j)\Big)^T \!\!  \eta(\theta,\lambda)  -( n +\mu_0) A(\eta(\theta,\lambda)) \Big)\! .\label{eq:ExponentialFamilyPosteriorLinearcombination}
\end{align}
Hence, the conjugate prior for the likelihood \eqref{eq:ExponentialFamilyLikelihood} is parametrized by $\mu$ and  given by: 
\begin{align}
p(\mathcal{X}|\theta,\lambda)  =\frac{1}{Z}\exp\left( \mu_1\cdot \eta(\theta,\lambda) - \mu_0 A(\eta(\theta,\lambda)) \right),
\end{align} 
where $Z={\int{\exp\left(\mu_1\cdot \eta(\theta,\lambda) - \mu_0 A(\eta(\theta,\lambda)) \right) \d x}}$.
\end{proof}

\subsection{Exact computation of the posterior update for the Normal inverse Wishart}\label{proof:Normal inverse Wishart}
To make our proof simple, we first start by the one dimensional case and show that in one dimension it is a normal inverse gamma. We then generalize to the multi dimensional case.
\begin{lemma} \label{lemma1}
The probability density function of a Normal inverse gamma (denoted by NIG) random variable can be expressed as the product of a Normal and an Inverse gamma probability density functions.
\end{lemma}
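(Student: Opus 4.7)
The plan is to prove this lemma by direct inspection of the joint density: write the NIG density in its closed form, then recognize the factorization as the product of a conditional Normal density for $\mu$ given $\sigma^2$ times a marginal Inverse Gamma density for $\sigma^2$. Concretely, I would define $\mathrm{NIG}(\mu_0,\lambda,\alpha,\beta)$ as the joint law of $(\mu,\sigma^2)$ whose density is
\[
f(\mu,\sigma^2\mid\mu_0,\lambda,\alpha,\beta) = \frac{\sqrt{\lambda}}{\sqrt{2\pi\sigma^2}}\,\frac{\beta^{\alpha}}{\Gamma(\alpha)}\,\left(\frac{1}{\sigma^2}\right)^{\alpha+1}\exp\!\left(-\frac{2\beta+\lambda(\mu-\mu_0)^2}{2\sigma^2}\right),
\]
which is the standard form found, e.g., in Gelman et al.\ (already cited). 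Writing it this way makes the factorization transparent.

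Next, I would recall that the Normal density $\mathcal{N}(\mu;\mu_0,\sigma^2/\lambda)$ is
\[
\frac{\sqrt{\lambda}}{\sqrt{2\pi\sigma^2}}\exp\!\left(-\frac{\lambda(\mu-\mu_0)^2}{2\sigma^2}\right),
\]
and the Inverse Gamma density $\mathrm{IG}(\sigma^2;\alpha,\beta)$ is
\[
\frac{\beta^{\alpha}}{\Gamma(\alpha)}\left(\frac{1}{\sigma^2}\right)^{\alpha+1}\exp\!\left(-\frac{\beta}{\sigma^2}\right).
\]
A one-line multiplication then shows that the product of these two densities matches $f(\mu,\sigma^2\mid\mu_0,\lambda,\alpha,\beta)$: the two $\sqrt{2\pi\sigma^2}^{-1}\sqrt{\lambda}$ and $\beta^{\alpha}/\Gamma(\alpha)$ prefactors combine, the two exponentials combine by adding exponents, and the powers of $1/\sigma^2$ add up consistently because the conditional Normal contributes one factor of $(\sigma^2)^{-1/2}$ that is absorbed into the joint.

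Therefore we obtain the identity
\[
f(\mu,\sigma^2\mid\mu_0,\lambda,\alpha,\beta) = \mathcal{N}\!\left(\mu;\mu_0,\sigma^2/\lambda\right)\cdot \mathrm{IG}(\sigma^2;\alpha,\beta),
\]
which is exactly the claimed factorization. I do not foresee any genuine obstacle here: the result is essentially a restatement of how NIG is constructed as the hierarchical law $\sigma^2\sim\mathrm{IG}(\alpha,\beta)$, $\mu\mid\sigma^2\sim\mathcal{N}(\mu_0,\sigma^2/\lambda)$, and the only care required is bookkeeping of the constants and of the exponent involving $\lambda(\mu-\mu_0)^2$. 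This one-dimensional lemma then serves as the base case for the multivariate NIW identity needed in the subsequent proof of Proposition \ref{Bayesian_update}.
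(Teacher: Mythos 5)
Your proof is correct, but it runs in the opposite direction from the paper's. The paper proves the lemma abstractly: it invokes the chain rule for conditional densities, $p(\mu,\sigma^2\mid\cdot)=p(\mu\mid\sigma^2,\mu_0,v)\,p(\sigma^2\mid\alpha,\beta)$, and then appeals to the (hierarchical) definition of the Normal inverse gamma law, with no density algebra at all; the explicit closed form of the NIG density is recorded separately afterwards (Remark \ref{remark1}) and is not used in the proof. You instead take that closed form as the starting definition and verify the factorization $f(\mu,\sigma^2)=\mathcal{N}(\mu;\mu_0,\sigma^2/\lambda)\cdot\mathrm{IG}(\sigma^2;\alpha,\beta)$ by direct multiplication, checking the prefactors, the powers of $1/\sigma^2$, and the combined exponent $\bigl(2\beta+\lambda(\mu-\mu_0)^2\bigr)/(2\sigma^2)$ — all of which is accurate. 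The trade-off is essentially which definition of NIG one treats as primitive: the paper's route is shorter and works without ever writing normalizing constants, but it silently presupposes the hierarchical construction $\sigma^2\sim\mathrm{IG}(\alpha,\beta)$, $\mu\mid\sigma^2\sim\mathcal{N}(\mu_0,\sigma^2/\lambda)$; your route is fully explicit and self-contained given the closed-form density, and as a by-product it derives the formula the paper only states in Remark \ref{remark1}, which is then what Proposition \ref{proposition1} actually matches the posterior against. Either way the lemma is established, so your proposal stands as a valid, slightly more computational alternative.
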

\begin{proof} we suppose that $x|\mu,\sigma^2 \sim \mathcal{N}(\mu_0,\sigma^2/v)$. We recall the following definition of conditional probability:
\begin{definition}\label{def1}
Suppose that events A,B and C are defined on the same probability space, and the event B is such that $\mathbb{P}(B) > 0$. We have the following expression:\\
$\mathbb{P}(A \cap B|C) = \mathbb{P}(A|B,C) \mathbb{P}(B|C)$.
\end{definition}
Applying \ref{def1}, we have: 
\begin{flalign}
p\left( \mu, \sigma^2 | \mu_0,v,\alpha,\beta \right) & = p\left( \mu | \sigma^2, \mu_0,v,\alpha,\beta \right) p\left( \sigma^2 | \mu_0,v,\alpha,\beta \right) \nonumber \\
&= p\left( \mu | \sigma^2, \mu_0,v \right) p\left( \sigma^2 | \alpha,\beta \right). 
\end{flalign}
Using the definition of the Normal inverse gamma law, we end the proof.
\end{proof}

\begin{remark}\label{remark1}
If $\left( x,\sigma^2\right) \sim NIG(\mu,\lambda,\alpha,\beta)$, the probability density function is the following: 
\begin{flalign}
f( x,\sigma^2 | \mu,\lambda,\alpha,\beta) & = \frac{\sqrt{\lambda}}{\sigma \sqrt{2\pi}} \frac{\beta^{\alpha}}{\Gamma(\alpha)} \left(\frac{1}{\sigma^2}\right)^{\alpha+1} \nonumber \\
& \hspace{2cm} \exp\left\{- \frac{2\beta+\lambda (x-\mu)^2}{2\sigma^2} \right\}.
\end{flalign}    
\end{remark}

\begin{proposition} \label{proposition1}
The Normal Inverse Gamma NIG $\left( \mu_0,v,\alpha,\beta \right) $ distribution is a conjugate prior of a normal distribution with unknown mean and variance.
\end{proposition}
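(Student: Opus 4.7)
The plan is to verify conjugacy by direct computation. I will start from Bayes' rule, multiply the $NIG(\mu_0, v, \alpha, \beta)$ prior density given in Remark \ref{remark1} by the joint likelihood of $n$ i.i.d. normal observations $x_1, \ldots, x_n$ with unknown mean $\mu$ and variance $\sigma^2$, and exhibit the resulting posterior as a $NIG$ density with updated hyperparameters. Invoking Lemma \ref{lemma1} to write the prior as $\mathcal{N}(\mu \mid \mu_0, \sigma^2/v)\, \mathrm{IGamma}(\sigma^2 \mid \alpha, \beta)$ cleanly separates the $\mu$-dependence from the pure-$\sigma^2$-dependence and keeps the bookkeeping transparent.

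First I would collect everything inside the exponential as a function of $(\mu,\sigma^2)$. The $\mu$-dependent quadratic part is $v(\mu-\mu_0)^2 + \sum_{i=1}^n (x_i-\mu)^2$, and the key algebraic step is completing the square in $\mu$. Using $\bar{x}=\tfrac{1}{n}\sum_i x_i$ together with the identity $\sum_i (x_i-\mu)^2 = \sum_i (x_i-\bar{x})^2 + n(\bar{x}-\mu)^2$, this rewrites as
\[
(v+n)\bigl(\mu-\mu_0^\star\bigr)^2 + \frac{vn}{v+n}(\bar{x}-\mu_0)^2 + \sum_{i=1}^n (x_i-\bar{x})^2,
\]
where $\mu_0^\star = \tfrac{v\mu_0 + n\bar{x}}{v+n}$. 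The first term matches the quadratic form of a normal in $\mu$ with precision parameter $v^\star=v+n$, while the remaining two terms are independent of $\mu$ and will be folded into the $\sigma^2$-factor.

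Next I would count powers of $\sigma^2$ and read off the inverse-gamma factor. The prior contributes $(\sigma^2)^{-\alpha-1-1/2}$ (the extra $1/2$ coming from the conditional normal on $\mu$) and the $n$-sample likelihood contributes $(\sigma^2)^{-n/2}$, so the posterior carries $(\sigma^2)^{-(\alpha+n/2)-1-1/2}$, which is exactly the exponent required for a $NIG$ with shape $\alpha^\star = \alpha + n/2$. Comparing the remaining $\sigma^2$-exponential with the form in Remark \ref{remark1} then identifies
\[
v^\star = v+n, \qquad \beta^\star = \beta + \tfrac{1}{2}\sum_{i=1}^n (x_i-\bar{x})^2 + \tfrac{vn}{2(v+n)}(\bar{x}-\mu_0)^2,
\]
so that the posterior is precisely $NIG(\mu_0^\star, v^\star, \alpha^\star, \beta^\star)$, establishing conjugacy.

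The main obstacle is purely bookkeeping: keeping the completion of the square aligned with the correct powers of $\sigma^2$ and making sure the leftover constants (independent of both $\mu$ and $\sigma^2$) are absorbed into the normalizing constant rather than attributed by mistake to $\beta^\star$. There is no conceptual difficulty; once the prior and likelihood are expanded in the form dictated by Remark \ref{remark1} and Lemma \ref{lemma1} is used to factor the NIG, the result is a standard quadratic-completion exercise.
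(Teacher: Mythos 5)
Your proposal is correct and follows essentially the same route as the paper's proof: write the posterior as prior times likelihood using the Normal--Inverse-Gamma factorization, complete the square in $\mu$ via $\sum_i(x_i-\mu)^2=\sum_i(x_i-\bar x)^2+n(\bar x-\mu)^2$, count the powers of $\sigma^2$, and read off the updated parameters $\mu^\star,\ v^\star=v+n,\ \alpha^\star=\alpha+n/2,\ \beta^\star$. Your stated $\beta^\star$ is the standard (correct) value, matching what the paper's intermediate computation yields.
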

\begin{proof} the posterior is proportional to the product of the prior and likelihood, then:
\begin{flalign}
p\left(\mu, \sigma^2|X \right) &\propto \frac{\sqrt{v}}{ \sqrt{2\pi}} \left(\frac{1}{\sigma^2}\right)^{1/2} \exp\left\{ \frac{-v(\mu-\mu_0)^2}{2\sigma^2} \right\} \nonumber \\
&\hspace{0.5cm} \times \frac{\beta^{\alpha}}{\Gamma(\alpha)}  \left(\frac{1}{\sigma^2}\right)^{\alpha+1} \exp\left\{ \frac{-\beta}{\sigma^2} \right\} \nonumber \\
&\hspace{0.5cm} \times \left(\frac{1}{ 2\pi \sigma^2}\right)^{n/2} \exp\left\{- \frac{\sum_{i=1}^n \left( x_i-\mu \right)^2}{2\sigma^2} \right\}.
\end{flalign}

Defining the empirical mean and variance as $\overline{x} = \frac{1}{n} \sum_{i=1}^n x_i $ and $\overline{s} = \frac{1}{n} \sum_{i=1}^n (x_i -\overline{x})^2$, we obtain that $\sum_{i=1}^n \left( x_i-\mu \right)^2 = n(\overline{s}+(\overline{x}-\mu)^2)$.
\vspace{0.3cm}

\noindent So, the conditional density writes:
\begin{flalign}
 p\left(\mu, \sigma^2|X \right) &\propto \sqrt{v}  \left(\frac{1}{\sigma^2}\right)^{\alpha+n/2+3/2} \nonumber \\
&\hspace{-1cm} \times  \exp\Bigg\{- \frac{1}{\sigma^2} \Big[ \beta + \frac{1}{2} \Big( v(\mu-\mu_0)^2 +n\big(\overline{s}+(\overline{x}-\mu)^2\big) \Big) \Big] \Bigg\}.
\end{flalign}

Besides,
\begin{flalign}
 &\hspace{0.5cm} v(\mu-\mu_0)^2 +n\big(\overline{s}+(\overline{x}-\mu)^2\big)  \nonumber \\
& =  v \big(\mu^2-2\mu \mu_0+\mu_o^2 \big)+n \overline{s}+n\big(\overline{x}^2-2\overline{x}\mu+\mu^2\big)  \nonumber \\
& =  \mu^2 \left(v+n \right) -2\mu \left(v\mu_0+n\overline{x} \right)+v\mu_o^2+n\overline{s}+n\overline{x}^2.
\end{flalign}

Denoting $a=v+n$ and $b=v\mu_0+n\overline{x}$, we have :
\begin{flalign}
& \hspace{0.5cm} \beta + \frac{1}{2} \Big( v(\mu-\mu_0)^2 +n\big(\overline{s}+(\overline{x}-\mu)^2\big)\Big)  \nonumber \\
&=\beta +\frac{1}{2} \left(a \mu^2  -2b\mu +v\mu_o^2+n\overline{s}+n\overline{x}^2\right)  \nonumber \\
&=\beta +\frac{1}{2} \left(a\left( \mu^2  -\frac{2b}{a}\mu\right) +v\mu_o^2+n\overline{s}+n\overline{x}^2\right)  \nonumber \\
&=\beta +\frac{1}{2} \left(a\left( \mu  -\frac{b}{a}\right)^2 - \frac{b^2}{a} +v\mu_o^2+n\overline{s}+n\overline{x}^2\right).
\end{flalign}
So we can express the proportional expression of the posterior :
\begin{flalign*}
p\left(\mu, \sigma^2|X \right) \propto  \left(\frac{1}{\sigma^2}\right)^{\alpha^\star+3/2} \times  \exp\Bigg\{- \frac{2\beta^\star+\lambda^\star \left( \mu  -\mu^\star\right)^2}{2\sigma^2}\Bigg\},
\end{flalign*}

with
\begin{itemize}
\item $\alpha^\star = \alpha+\frac{n}{2}$
\item$ \beta^\star=\beta +\frac{1}{2}\left(\sum_{i=1}^n \left(x_i-\overline{x}\right)^2  +\frac{nv}{n+v} \frac{(\overline{x}-\mu_0)^2}{2}\right)$
\item $\mu^\star =\frac{v\mu_0+n\overline{x}}{v+n} $
\item $\lambda^\star = v+n$
\end{itemize} 
We can identify the terms with the expression of the probability density function given in \ref{remark1} to conclude that the posterior follows a NIG$(\mu^\star,\lambda^\star,\alpha^\star,\beta^\star)$.
\end{proof}

\noindent We are now ready to prove the following proposition:
\begin{proposition} \label{proposition2}
The Normal Inverse Wishart  (denoted by NIW) $\left( \mu_0,\kappa_0,v_0,\psi \right) $ distribution is a conjugate prior of a multivariate normal distribution with unknown mean and covariance.
\end{proposition}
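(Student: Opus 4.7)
The plan is to lift the one-dimensional argument of Proposition~\ref{proposition1} to the multivariate setting by exploiting the product decomposition of the NIW density. First, I would record the analogue of Lemma~\ref{lemma1}: using the conditional factorization $p(\mu,\Sigma\mid\mu_0,\kappa_0,v_0,\psi)=p(\mu\mid\Sigma,\mu_0,\kappa_0)\,p(\Sigma\mid v_0,\psi)$, the NIW density equals $\mathcal N(\mu\mid\mu_0,\Sigma/\kappa_0)\cdot\mathcal{W}^{-1}(\Sigma\mid\psi,v_0)$. This gives an explicit density proportional to $|\Sigma|^{-(v_0+d+2)/2}\exp\bigl\{-\tfrac12\mathrm{Tr}(\psi\Sigma^{-1})-\tfrac{\kappa_0}{2}(\mu-\mu_0)^T\Sigma^{-1}(\mu-\mu_0)\bigr\}$ up to constants independent of $(\mu,\Sigma)$.

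Next, I would write the joint likelihood of $\mathcal X=(x_1,\dots,x_n)$ under $\mathcal N_d(\mu,\Sigma)$ and rewrite the sum of quadratic forms using the standard ``sample mean'' identity $\sum_{i=1}^n(x_i-\mu)^T\Sigma^{-1}(x_i-\mu) = \mathrm{Tr}(S\Sigma^{-1}) + n(\overline x - \mu)^T\Sigma^{-1}(\overline x - \mu)$, where $S=\sum_{i=1}^n(x_i-\overline x)(x_i-\overline x)^T$ and $\overline x$ is the sample mean. Multiplying prior by likelihood then yields a posterior proportional to
\begin{equation*}
|\Sigma|^{-(v_0+n+d+2)/2}\exp\Bigl\{-\tfrac12\mathrm{Tr}\bigl((\psi+S)\Sigma^{-1}\bigr)-\tfrac{1}{2}\bigl[\kappa_0(\mu-\mu_0)^T\Sigma^{-1}(\mu-\mu_0)+n(\mu-\overline x)^T\Sigma^{-1}(\mu-\overline x)\bigr]\Bigr\}.
\end{equation*}

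The central step, which is the multivariate analogue of the completion of the square in Proposition~\ref{proposition1}, is to combine the two quadratic forms in $\mu$ into a single quadratic plus a residual. Expanding and grouping gives
\begin{equation*}
\kappa_0(\mu-\mu_0)^T\Sigma^{-1}(\mu-\mu_0)+n(\mu-\overline x)^T\Sigma^{-1}(\mu-\overline x) = (\kappa_0+n)(\mu-\mu_0^\star)^T\Sigma^{-1}(\mu-\mu_0^\star)+\tfrac{\kappa_0 n}{\kappa_0+n}(\overline x-\mu_0)^T\Sigma^{-1}(\overline x-\mu_0),
\end{equation*}
where $\mu_0^\star=(\kappa_0\mu_0+n\overline x)/(\kappa_0+n)$. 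The cross term $\tfrac{\kappa_0 n}{\kappa_0+n}(\overline x-\mu_0)(\overline x-\mu_0)^T$ is then absorbed into the trace via $a^T\Sigma^{-1}a=\mathrm{Tr}(aa^T\Sigma^{-1})$ and combined with $\psi+S$. This is the main obstacle: doing the matrix algebra carefully so that the coefficient of $(\mu-\mu_0^\star)^T\Sigma^{-1}(\mu-\mu_0^\star)$ matches $\kappa_0^\star=\kappa_0+n$ and the remaining trace term gives exactly $\psi^\star$ as in \eqref{eq:Bayesian_update}.

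Finally, I would read off the posterior parameters by comparing the resulting expression with the NIW density written in the first step. The exponent of $|\Sigma|$ becomes $-(v_0^\star+d+2)/2$ with $v_0^\star=v_0+n$; the Gaussian part in $\mu$ has precision $\kappa_0^\star\Sigma^{-1}$ centered at $\mu_0^\star$; and the trace part gives the inverse-Wishart scale $\psi^\star=\psi+S+\tfrac{\kappa_0 n}{\kappa_0+n}(\overline x-\mu_0)(\overline x-\mu_0)^T$. This exhibits the posterior as $\mathrm{NIW}(\mu_0^\star,\kappa_0^\star,v_0^\star,\psi^\star)$, establishing both conjugacy and the update rules of Proposition~\ref{Bayesian_update}.
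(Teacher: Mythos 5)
Your proposal is correct and follows essentially the same route as the paper's proof: factor the NIW density into a Normal times an inverse-Wishart part, decompose the Gaussian likelihood via the sample-mean identity and the trace trick, and identify the posterior as $\mathrm{NIW}(\mu_0^\star,\kappa_0^\star,v_0^\star,\psi^\star)$. The only difference is that you spell out the multivariate completion of the square in $\mu$ (with the cross term $\tfrac{\kappa_0 n}{\kappa_0+n}(\overline x-\mu_0)(\overline x-\mu_0)^T$ absorbed into the trace), a step the paper compresses into ``we organize the terms''; your identity and the resulting parameters match \eqref{eq:Bayesian_update} exactly.
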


\begin{proof} we use the fact that the probability density function of a Normal inverse Wishart random variable can be expressed as the product of a Normal and an Inverse Wishart probability density functions (we use the same reasoning that in \ref{lemma1}). Besides, the posterior is proportional to the product of the prior and the likelihood. \\
We first express the probability density function of the multivariate Gaussian random variable in a proper way in order to use it when we write the posterior density function.
\begin{eqnarray}
 & \sum_{i=1}^n \left( x_i-\mu\right)^T \Sigma^{-1} \left( x_i-\mu\right) \nonumber \\
 =& n \left( \overline{x}-\mu\right)^T \Sigma^{-1} \left( \overline{x}-\mu\right)  + \sum_{i=1}^n \left( x_i-\overline{x}\right)^T \Sigma^{-1} \left( x_i-\overline{x}\right).
\end{eqnarray}
We can inject the previous result and use the properties of the trace function to express the following probability density function of the multivariate Gaussian random variable of parameters $\mu$ and $\Sigma$. The density writes as:

\begin{eqnarray}
&  \hspace{-2cm} \frac{|\Sigma|^{-n/2}}{ \sqrt{(2\pi)^{pn}}} \exp\Bigg\{ -\frac{n}{2} \left( \overline{x}-\mu\right)^T \Sigma^{-1} \left( \overline{x}-\mu \right)\nonumber \\
&  \hspace{2cm}  -\frac{1}{2} tr\left(\Sigma^{-1} \sum_{i=1}^n \left( x_i-\overline{x}\right) \left( x_i-\overline{x}\right)^T\right) \Bigg\}.
\end{eqnarray}

\noindent Hence, we can compute explicitly the posterior as follows:
\begin{flalign}
p\left(\mu, \sigma^2|X \right) &\propto \frac{\sqrt{\kappa_0}}{ \sqrt{(2\pi)^p|\Sigma|}} \exp\left\{- \frac{\kappa_0}{2} \big(\mu-\mu_0 \big)^T\Sigma^{-1} \big(\mu-\mu_0 \big) \right\}\nonumber \\
& \hspace{0.5cm} \times \frac{|\psi|^{v/2}}{ 2^{vp/2}\Gamma_p(v_0/2)} |\Sigma|^{-\frac{v_0+p+1}{2}}exp\left\{- \frac{1}{2} tr\left(\psi \Sigma^{-1} \right) \right\} \nonumber \\
& \hspace{0.5cm} \times |\Sigma|^{-n/2} \exp \Bigg\} -\frac{n}{2} \left( \overline{x}-\mu\right)^T \Sigma^{-1} \left( \overline{x}-\mu \right) \nonumber \\
& \hspace{1 cm} -\frac{1}{2} tr\left(\Sigma^{-1} \sum_{i=1}^n \left( x_i-\overline{x}\right) \left( x_i-\overline{x}\right)^T\right) \Bigg\}  
%%% prop part
\end{flalign}
\begin{flalign}
&\propto  |\Sigma|^{-\frac{v_0+p+2+n}{2}} \exp\Bigg\{- \frac{\kappa_0}{2} \big(\mu-\mu_0 \big)^T\Sigma^{-1} \big(\mu-\mu_0 \big)  \nonumber \\
& \hspace{0.5cm} -\frac{n}{2} \left( \overline{x}-\mu\right)^T \Sigma^{-1} \left( \overline{x}-\mu \right) \nonumber \\
& \hspace{0.5cm} -\frac{1}{2} tr\left(\Sigma^{-1} \Big(\psi + \sum_{i=1}^n \left( x_i-\overline{x}\right) \left( x_i-\overline{x}\right)^T\Big)\right) \Bigg\} .
\end{flalign}

We organize the terms and find the parameters of our Normal Inverse Wishart random variable NIW$(\mu_0^\star,\kappa_0^\star,v_0^\star,\psi^\star)$.
\begin{equation}
\begin{split}
\mu_0^\star &= \frac{\kappa_0 \mu_0+n\overline{x}}{\kappa_0 +n}, \quad \kappa_0^\star =\kappa_0+n, \quad v_0^\star  = v_0+n \\
\psi^\star &= \psi + \sum_{i=1}^n \left( x_i-\overline{x}\right) \left( x_i-\overline{x}\right)^T+\frac{\kappa_0 n}{\kappa_0 +n}\left(\overline{x}-\mu_0\right) \left(\overline{x}-\mu_0\right)^T
\end{split}
\end{equation}
which are exactly the equations provided in \eqref{eq:Bayesian_update}.
\end{proof}

\subsection{Weighted combination for the BCMA ES update}\label{proof:weighted_combination}
\begin{proof}
%\ref{relation_update}
\begin{equation}
\begin{split}
\mathbb{E}_{t+1}[\mu]  &= \mu_{t+1} \\
& =  \frac{\kappa_{t} \mu_{t}+ n \overline{\mu}_{t} }{\kappa_{t} +n}\\
& =  \mathbb{E}_{t}[\mu ]   + w_{t}^{\mu} \left( \hat{\mu} - \mathbb{E}_{t}[\mu ]\right)
\end{split}
\end{equation}

\begin{equation}
\begin{split}
\mathbb{E}_{t+1}[\Sigma]  & = \frac{\psi_{t+1}}{v_{t+1}-n-1} \\
& =  \frac{1}{v_{t}-1}\psi_{t} \! +\! \frac{1}{v_{t}-1} \overline{\Sigma}_{t}  \! \\
& \quad +\!\frac{\kappa_{t} n}{(\kappa_{t} +n)(v_{t}-1)}\! \left( \overline{\mu}_{t}   -\mu_{t} \right) \left(  \overline{\mu}_{t}   -\mu_{t}\right)^T \\
&= \hspace{-0.45cm}  \underbrace{w_{t}^{\Sigma,1}}_{\text{discount factor}} \hspace{-0.5cm} \mathbb{E}_{t}[\Sigma]  +   w_{t}^{\Sigma,2}  \underbrace{\left(\hat{\mu} - \mathbb{E}_{t}[\mu ] \right) \left(\hat{\mu} - \mathbb{E}_{t}[\mu ]\right)^T}_{\text{rank one matrix}} \\
& \quad +{w_{t}^{\Sigma,3}} \hspace{-0.5cm}   \underbrace{\overline{\Sigma}_{t}}_{\text{rank (n-1) matrix}} \\
 \mathrm{where \qquad } w_{t}^{\mu}  & =  \frac{n}{\kappa_{t} +n}, \\
 w_{t}^{\Sigma,1} & =  \frac{ v_{t} - n -1}{ v_{t}-1}, \\
 w_{t}^{\Sigma,2} & = \frac{\kappa_{t} n}{(\kappa_{t} +n)( v_{t}-1)}
\end{split}
\end{equation}

\noindent $\overline{\Sigma}_{t}$ is a covariance matrix of rank $n-1$ as we subtract the empirical mean (which removes one degree of freedom). The matrix ${\left(\hat{\mu} - \mathbb{E}_{t}[\mu ] \right) \left(\hat{\mu} - \mathbb{E}_{t}[\mu ]\right)^T}$ is of rank 1 as it is parametrized by the vector $\hat{\mu}$.
\end{proof}

\clearpage
\normalsize
\bibliographystyle{ACM-Reference-Format}
\bibliography{bibfile} 

\end{document}